\documentclass[10pt,twocolumn,letterpaper]{article}

\usepackage[utf8]{inputenc}
\usepackage[T1]{fontenc}
\usepackage{lmodern}
\usepackage[letterpaper,margin=0.78in]{geometry}

\usepackage{amsmath,amssymb,amsthm}
\usepackage{bm}
\usepackage{graphicx}
\usepackage{booktabs}
\usepackage{xcolor}
\usepackage{colortbl}
\usepackage{multirow}
\usepackage{array}
\usepackage{pifont}
\usepackage{tikz}
\usetikzlibrary{arrows.meta,positioning,calc,fit,backgrounds}
\usepackage{cite}
\usepackage[hidelinks]{hyperref}

\definecolor{HdrBlue}{RGB}{31,78,121}
\definecolor{OursTint}{RGB}{223,236,250}
\definecolor{RowAlt}{RGB}{245,248,251}
\definecolor{CosmicP}{RGB}{120,72,178}
\definecolor{EveryO}{RGB}{226,135,28}
\definecolor{GoodGreen}{RGB}{24,138,86}
\definecolor{BadRed}{RGB}{197,57,50}

\newcommand{\B}[1]{\textbf{#1}}
\newcommand{\hd}[1]{\textcolor{white}{\textbf{#1}}}
\newcommand{\down}{$\,\downarrow$}
\newcommand{\up}{$\,\uparrow$}
\newcommand{\best}[1]{\textbf{#1}}

\newcommand{\Zb}{\bar{Z}}

\newcommand{\Zref}{\Zb^{\mathrm{ref}}}
\newcommand{\Ztgt}{\Zb^{\mathrm{tgt}}}

\theoremstyle{plain}
\newtheorem{theorem}{Theorem}
\newtheorem{lemma}{Lemma}
\newtheorem{proposition}{Proposition}
\theoremstyle{definition}
\newtheorem{definition}{Definition}

\title{\vspace{-1.0em}\textbf{ParaScale: Scale-Calibrated Camera-Motion Transfer\\
via a Gauge-Invariant Parallax Number}\vspace{-0.2em}}

\author{Zijie Meng\\[2pt]
Peking University, China\\[1pt]
{\small\texttt{ymlf@stu.pku.edu.cn}}}
\date{}

\begin{document}
\maketitle

\begin{abstract}
\emph{Transferring the camera motion of a reference video to a freshly
generated one lets creators reuse cinematic moves. Yet reference and target
often live at incompatible \emph{scales}---a sweep across a galaxy versus a
nudge across a desk---and naively reusing the recovered trajectory yields
either imperceptible or violently exaggerated motion. We trace this to a
geometric fact: translation-induced image motion scales as
$\lVert\bm T\rVert/Z$, so a monocular trajectory is meaningful only up to a
depth-scale gauge. We distill this into the \textbf{Parallax Number}
$\Pi=\lVert\Delta\bm T\rVert/\Zb$, a dimensionless, gauge-invariant descriptor
of how strongly a camera move is \emph{felt}, and prove that it---not the raw
trajectory---is the quantity that scale-faithful transfer must preserve.
\textbf{ParaScale} is a plug-and-play module that reads $\Pi$ off any reference
video and re-realizes it against the target scene's own depth, per frame,
leaving rotation untouched. Sitting between pose extraction and pose injection,
it requires no retraining and drops into any pose-conditioned generator. We
further introduce the \textbf{Parallax Consistency Error} (PCE), a
scale-symmetric metric that---unlike the similarity-aligned TransErr---exposes
scene-scale mismatch. Across scale regimes spanning four orders of magnitude
and multiple backbones, ParaScale keeps the realized parallax on the identity
line and cuts PCE by $>\!3\times$ over uncalibrated transfer with no loss of
visual fidelity.}
\end{abstract}

\section{Introduction}
Reference-driven camera control---``move my shot the way \emph{that} clip
moves''---is a natural interface for generative video~\cite{poole2022dreamfusion, you2024nvs, wang2023videocomposer, li2025magicmotion, wang2024motionctrl, geyer2023tokenflow, wang2025cinemaster}. Recent systems extract
a pose trajectory from a reference and inject it into a diffusion model
through Pl\"ucker conditioning~\cite{He2025CameraCtrl} or by cloning
attention~\cite{Hu2024MotionMaster}, and the same controllable-generation
paradigm now spans multi-view driving video with world-model
guidance~\cite{mengomnidrive}, general multi-shot camera
cloning~\cite{liu2026omnidirector}, subject-preserving
synthesis~\cite{meng2026argus}, and even interactive game
rendering~\cite{meng2026make}. A silent assumption underlies all reference-driven
controllers: that the \emph{numbers} of the reference trajectory transfer
verbatim. They do not. A reference orbiting a planet and a target framing a
coffee cup differ in scene scale by orders of magnitude, and replaying the raw
translation makes the cup either drift imperceptibly or rocket out of frame.

\paragraph{Why scale breaks transfer.}
For a point at depth $Z$ under camera linear/angular velocity
$(\bm T,\bm\omega)$, the image motion is
\begin{equation}\label{eq:flow}
\mathbf u(\mathbf x)=
\underbrace{\tfrac{1}{Z}\!\begin{bmatrix}xT_z-fT_x\\ yT_z-fT_y\end{bmatrix}}_{\mathbf u_T\ \propto\ \lVert\bm T\rVert/Z}
\;+\;
\underbrace{\begin{bmatrix}\tfrac{xy}{f} & -\!\big(f+\tfrac{x^2}{f}\big) & y\\[3pt]
f+\tfrac{y^2}{f} & -\tfrac{xy}{f} & -x\end{bmatrix}}_{\mathbf B(\mathbf x)}\bm\omega,
\end{equation}
with focal length $f$ and a depth-independent rotational term
$\mathbf B(\mathbf x)$. Translation enters \emph{only} through the ratio
$\lVert\bm T\rVert/Z$, and monocular reconstruction recovers $(\bm T,Z)$ only
up to a common unknown factor~\cite{HartleyZisserman}---the absolute
translation is, by itself, meaningless. What is \emph{not} ambiguous is the
dimensionless
\begin{equation}\label{eq:pn}
\Pi_t=\frac{\lVert\Delta\bm T_t\rVert}{\Zb_t},
\end{equation}
the per-frame inter-frame baseline normalized by median scene depth: scaling
$(\bm T,Z)\!\to\!(s\bm T,sZ)$ leaves $\Pi$ fixed. We call $\Pi$ the
\emph{Parallax Number}: it is precisely the gauge-invariant quantity that
determines how strongly a move is perceived. Faithful transfer must preserve
$\Pi$, not $\bm T$.

\paragraph{Contributions.}
\begin{itemize}\setlength{\itemsep}{1pt}
\item \textbf{A gauge-invariant principle for motion transfer.} We identify the
Parallax Number $\Pi=\lVert\Delta\bm T\rVert/\Zb$ as the depth-scale-gauge
invariant that governs perceived translational parallax, and prove that it---not
the raw trajectory---is what scale-faithful transfer must preserve.
\item \textbf{ParaScale and PCE.} We propose ParaScale, a training-free,
generator-agnostic, inference-time module that re-realizes $\Pi$ per frame
against the target scene's own depth while passing rotation through unchanged,
together with PCE, a scale-symmetric metric that, unlike similarity-aligned
TransErr, exposes scene-scale mismatch.
\item \textbf{Empirical validation.} Across four orders of magnitude of scene
scale and multiple pose-conditioned backbones, ParaScale keeps realized
parallax on the identity line, cuts PCE by $>\!3\times$ over uncalibrated
transfer and beats train-time scale calibration, with no loss of fidelity.
\end{itemize}

\begin{figure*}[t]
\centering
\includegraphics[width=\textwidth]{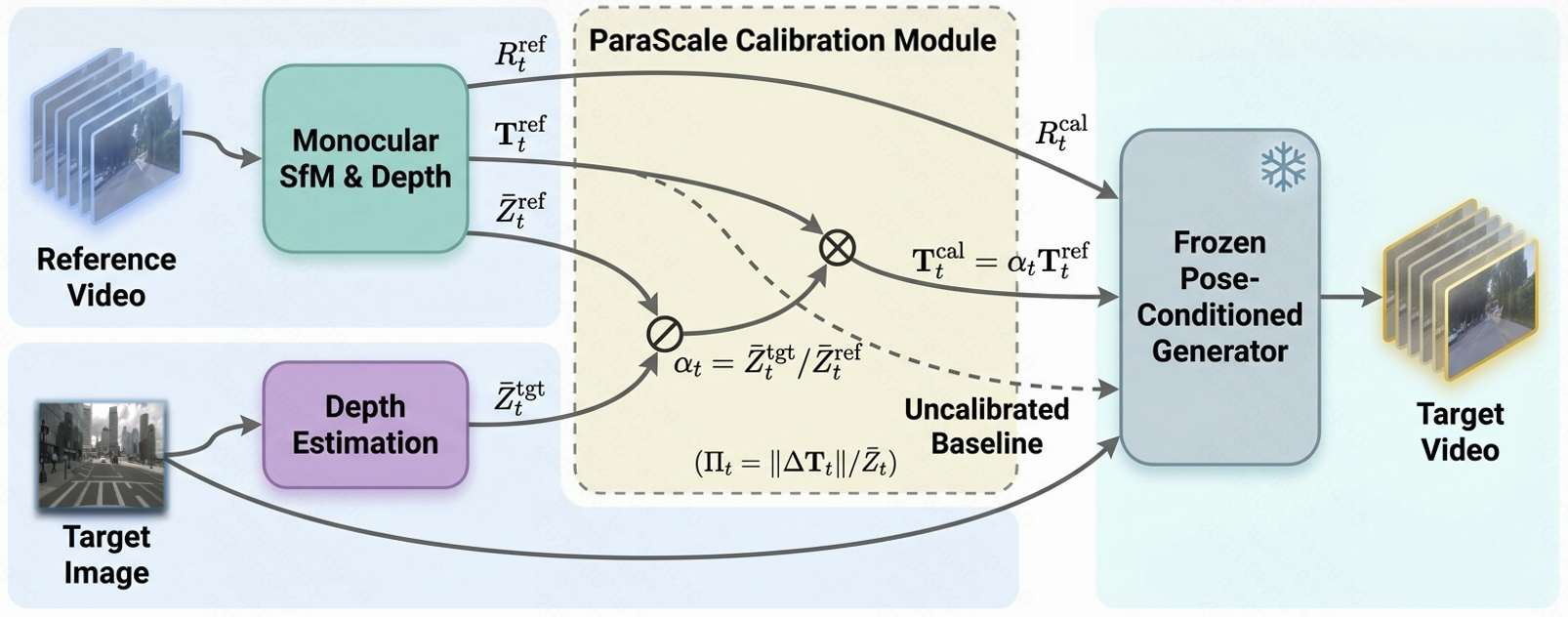}
\caption{\label{fig:method}\textbf{ParaScale.} From the reference we read the
gauge-invariant Parallax Number $\Pi_t$ (Eq.~\ref{eq:pn}); the target scene
supplies its own depth $\Ztgt_t$; a per-frame translational gain $\alpha_t$
re-realizes $\Pi_t$ in the target while rotation passes through unchanged
(Eq.~\ref{eq:cal}). The module is a drop-in pre-processor before a
\emph{frozen} pose-conditioned generator. The dashed path is the uncalibrated
baseline that ships raw translation and breaks under scale mismatch.}
\vspace{-0.4cm}
\end{figure*}

\section{Related Work}
\paragraph{Camera-controlled video generation.}
CameraCtrl conditions a frozen video diffusion model on per-pixel Pl\"ucker
maps, learning a plug-and-play camera encoder while leaving the backbone
untouched~\cite{He2025CameraCtrl}. MotionMaster instead transfers motion in a
training-free manner by disentangling and substituting temporal attention
maps, but it operates purely in attention space and never reasons about scene
metric scale~\cite{Hu2024MotionMaster}. CameraCtrl\,II improves dynamics and
viewpoint range, and---crucially for us---copes with ``arbitrary scale and
long-tailed trajectory distributions'' by normalizing its \emph{training} data
into a single metric space~\cite{He2025CameraCtrlII}. The broader controllable
generation wave---multi-view driving video~\cite{mengomnidrive}, multi-shot
camera cloning~\cite{liu2026omnidirector}, subject-preserving
synthesis~\cite{meng2026argus}, and game rendering~\cite{meng2026make}---shares
the same implicit assumption that trajectory \emph{numbers} are directly
reusable. None performs \emph{inference-time}, cross-scene scale calibration
between an arbitrary reference and an arbitrary target; this gap is exactly
what ParaScale fills.

\paragraph{Monocular geometry, scale, and beyond.}
A monocular reconstruction is recoverable only up to a similarity
transformation: an isometry composed with an isotropic scaling, so absolute
translation and absolute depth are individually unobservable while their ratio
is~\cite{HartleyZisserman, liu2022bevfusion, LCTGen, jiang2024dive, kim2021drivegan, wu2024drivescape, umgen, li2203bevformer}. This single fact is the root of the transfer
failure we formalize. More broadly, learning-based generative and restoration
models now reshape tasks from zero-shot inpainting~\cite{meng2025orpaint} and
adverse-weather image restoration~\cite{wei2025robust,weirusid} to
training-free few-shot medical segmentation~\cite{liu2025synpo} and provably
safe multi-agent control~\cite{meng2026trident}; yet none confronts the
cross-scene scale-transfer problem isolated here. We base our generator on the
open Wan2.1 diffusion-transformer suite~\cite{Wan2025}.

\section{Preliminaries: The Geometry of Felt Motion}\label{sec:prelim}
\paragraph{Projection and the motion field.}
Under a pinhole of focal length $f$, a camera-frame point
$\mathbf P=(X,Y,Z)$ projects to $\mathbf x=(x,y)=(fX/Z,\,fY/Z)$. If the camera
moves rigidly with linear velocity $\bm T$ and angular velocity $\bm\omega$,
the point's velocity relative to the camera is
$\dot{\mathbf P}=-\bm T-\bm\omega\times\mathbf P$. Differentiating the
projection, $\dot x=f\dot X/Z-x\dot Z/Z$ and $\dot y=f\dot Y/Z-y\dot Z/Z$, and
substituting $X/Z=x/f,\,Y/Z=y/f$ yields exactly Eq.~\eqref{eq:flow}: a
\emph{translational} term $\mathbf u_T$ that carries all the depth dependence
through $1/Z$, plus a \emph{rotational} term $\mathbf B(\mathbf x)\bm\omega$
that is independent of depth and of scene scale. Integrating over one
inter-frame interval, the translational image displacement of a point at depth
$Z$ is, to first order,
\begin{equation}\label{eq:utrans}
\mathbf d_T(\mathbf x)=\frac{1}{Z}\,\mathbf M(\mathbf x)\,\Delta\bm T,\qquad
\mathbf M(\mathbf x)=\begin{bmatrix}-f&0&x\\0&-f&y\end{bmatrix},
\end{equation}
where $\Delta\bm T$ is the inter-frame camera-center displacement (the
baseline).

\paragraph{The depth-scale gauge.}
Monocular SfM plus monocular depth recover camera centers
$\{\mathbf c_t\}$ and scene points $\{\mathbf P_j\}$ only up to a global
similarity. We isolate its one-parameter scaling subgroup
$g_s:(\mathbf c_t,\mathbf P_j)\mapsto(s\,\mathbf c_t,s\,\mathbf P_j)$,
$s\!>\!0$, which acts on the quantities we use by
\begin{equation}\label{eq:gauge}
g_s:\quad \Delta\bm T_t\mapsto s\,\Delta\bm T_t,\quad
\Zb_t\mapsto s\,\Zb_t,\quad R_t\mapsto R_t .
\end{equation}
That this is a genuine \emph{gauge} (an unobservable redundancy) follows from
projective invariance: $[R\mid \bm t]\,[\mathbf P;1]=
\tfrac1s[R\mid s\bm t]\,[s\mathbf P;1]$, so all images---and hence all flows
$\mathbf u(\mathbf x)$---are identical for every $s$~\cite{HartleyZisserman}.
Any quantity used to \emph{transfer} motion must therefore be a function of
gauge-invariant combinations only.

\section{Method: ParaScale}\label{sec:method}
ParaScale inserts a single calibration step into the standard transfer
pipeline (Fig.~\ref{fig:method}) and changes nothing else. We first make the
Parallax Number precise, then prove that it is the unique correct transfer
invariant, that scale-blind transfer fails in a quantifiable way, and that our
one-line correction is optimal and minimal.

\begin{definition}[Parallax Number]\label{def:pn}
For inter-frame baseline $\Delta\bm T_t$ and robust median scene depth $\Zb_t$,
the Parallax Number is $\Pi_t:=\lVert\Delta\bm T_t\rVert/\Zb_t$.
\end{definition}

\begin{lemma}[Gauge invariance and completeness]\label{lem:gauge}
$\Pi_t$ is invariant under the depth-scale gauge \eqref{eq:gauge}, whereas
$\lVert\Delta\bm T_t\rVert$ and $\Zb_t$ are each gauge-covariant of degree one.
Consequently $\Pi_t$ can be read off a monocular reference without any metric
assumption, and it is the (unique up to monotone reparametrization) scalar
invariant of the pair $(\Delta\bm T_t,\Zb_t)$ under \eqref{eq:gauge}.
\end{lemma}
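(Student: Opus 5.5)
The proof splits into three parts: invariance, readability, and uniqueness. Throughout I work with the scalar pair $(b,z):=(\lVert\Delta\bm T_t\rVert,\Zb_t)\in(0,\infty)^2$, which is all that $\Pi_t$ depends on.

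\textbf{Invariance and covariance.} By \eqref{eq:gauge}, $\Delta\bm T_t\mapsto s\,\Delta\bm T_t$. Since $s>0$, homogeneity of the norm gives $b\mapsto sb$. The median depth is defined by order statistics of the per-point depths $Z_j$, and each $Z_j$ is the third coordinate of $s\mathbf P_j$. Every depth is therefore multiplied by the same positive $s$, which preserves their ordering, so the median commutes with the scaling and $z\mapsto sz$. Both quantities are thus covariant of degree one. Substituting into the quotient, $\Pi_t\mapsto sb/(sz)=\Pi_t$.

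\textbf{Readability from a monocular reference.} Monocular SfM together with monocular depth returns some representative of the similarity class of the true configuration. Any two representatives differ by a rotation, a translation and a scaling. Rotations and translations leave both $\lVert\Delta\bm T_t\rVert$ and the camera-frame depths unchanged, because both are defined relative to the camera. The scaling part is exactly $g_s$, which was just shown to leave $\Pi_t$ fixed. Hence $\Pi_t$ evaluated on any reconstruction equals its metric value, and no metric assumption is needed. The projective identity stated just before \eqref{eq:gauge} guarantees that the reconstruction is determined only up to this class, so no further ambiguity enters.

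\textbf{Uniqueness.} The key observation is that the gauge group $\{g_s\}\cong(\mathbb R_{>0},\cdot)$ acts freely on $(0,\infty)^2$ via $(b,z)\mapsto(sb,sz)$. Its orbits are exactly the open rays through the origin, and each ray is labelled bijectively by the ratio $b/z\in(0,\infty)$. Now let $F(b,z)$ be any invariant. Choosing $s=1/z$ gives $F(b,z)=F(b/z,1)=:\phi(\Pi)$, so every invariant factors through $\Pi$. If $F$ is additionally required to be a complete invariant, meaning it separates orbits, then $\phi$ must be injective. If $F$ is also continuous, then $\phi$ is an injective continuous function on an interval and hence strictly monotone. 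This establishes that $\Pi$ is the unique scalar invariant up to monotone reparametrization.

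\textbf{Main obstacle.} The delicate point is stating the uniqueness claim precisely. A constant function is trivially invariant, so the statement is only true for complete, orbit-separating invariants, and continuity is needed to upgrade injectivity to monotonicity. I would make both hypotheses explicit in the proof. A smaller caveat is that the readability step relies on depth being measured in the camera frame; I would note this when dismissing the isometric part of the similarity.
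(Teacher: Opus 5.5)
Your proposal is correct and follows the paper's proof: invariance by direct substitution, and uniqueness because the $g_s$-orbits in the magnitude plane $(\lVert\Delta\bm T_t\rVert,\Zb_t)$ are rays labelled by their ratio, so every invariant factors through $\Pi_t$. Your added orbit-separation and continuity hypotheses are exactly what upgrades this to ``unique up to monotone reparametrization,'' which the paper's argument does not literally deliver. Like the paper, you also tacitly restrict to magnitudes; on the full vector pair the direction $\widehat{\Delta\bm T}_t$ is a further invariant.
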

\begin{proof}
Under $g_s$, $\Pi_t\!\mapsto\!\lVert s\Delta\bm T_t\rVert/(s\Zb_t)
=\lVert\Delta\bm T_t\rVert/\Zb_t=\Pi_t$. The orbit of $g_s$ through
$(\Delta\bm T_t,\Zb_t)$ is the ray $\{(s\Delta\bm T_t,s\Zb_t):s>0\}$, a
one-dimensional set in a two-dimensional (magnitude) space; any
gauge-invariant scalar is constant on rays and hence a function of the single
ray coordinate $\lVert\Delta\bm T_t\rVert/\Zb_t=\Pi_t$.
\end{proof}

\begin{proposition}[$\Pi$ governs felt translational parallax]\label{prop:felt}
Let the focal-normalized translational parallax at frame $t$ be the scene
median $p_t:=\mathrm{med}_{\mathbf x}\lVert\mathbf d_T(\mathbf x)\rVert/f$. Then
$p_t=\kappa_t\,\Pi_t$, where
$\kappa_t=\mathrm{med}_{\mathbf x}\lVert\mathbf M(\mathbf x)\,\widehat{\Delta\bm T}_t\rVert/f$
is a dimensionless $O(1)$ factor that depends only on the field of view and on
the \emph{direction} $\widehat{\Delta\bm T}_t$, not on scene scale.
\end{proposition}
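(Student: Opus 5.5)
The plan is to substitute the first-order displacement \eqref{eq:utrans} into the definition of $p_t$ and pull the baseline magnitude and the depth out of the median by homogeneity. Write $\Delta\bm T_t=\lVert\Delta\bm T_t\rVert\,\widehat{\Delta\bm T}_t$. By linearity of $\mathbf M(\mathbf x)$,
\[
\frac{\lVert\mathbf d_T(\mathbf x)\rVert}{f}=\frac{\lVert\Delta\bm T_t\rVert}{Z(\mathbf x)}\cdot\frac{\lVert\mathbf M(\mathbf x)\widehat{\Delta\bm T}_t\rVert}{f}.
\]
The median is positively homogeneous: $\mathrm{med}(c\,a)=c\,\mathrm{med}(a)$ for $c>0$. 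So once the depth factor is replaced by the robust scene depth $\Zb_t$, the common factor $\lVert\Delta\bm T_t\rVert/\Zb_t=\Pi_t$ (Definition~\ref{def:pn}) comes out of the median. What remains is exactly $\kappa_t$, which gives $p_t=\kappa_t\Pi_t$.

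\textbf{Main obstacle: per-pixel depth.} The step I expect to need care is that $Z(\mathbf x)$ varies across pixels, while the statement uses the single number $\Zb_t$. The median of a product is not the product of medians. I would therefore state the identity at the level at which $\Zb_t$ is meant as the representative depth, namely $Z(\mathbf x)\approx\Zb_t$ on the bulk of pixels (a depth-concentrated, roughly fronto-parallel scene). In that regime the identity is exact.

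For general scenes I would add a sandwich bound. If $Z(\mathbf x)\in[\Zb_t/\rho,\ \rho\,\Zb_t]$ on the pixels that determine the median, then monotonicity of the median gives
\[
\kappa_t\Pi_t/\rho\ \le\ p_t\ \le\ \rho\,\kappa_t\Pi_t .
\]
So $p_t=\kappa_t\Pi_t$ holds up to the depth-dispersion factor $\rho$, and that factor is itself gauge-invariant under \eqref{eq:gauge}, because $Z$ and $\Zb_t$ scale together.

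\textbf{Properties of $\kappa_t$.}
\begin{itemize}
\item \emph{Dimensionless and scale-free.} $\mathbf M(\mathbf x)/f=\begin{bmatrix}-1&0&x/f\\0&-1&y/f\end{bmatrix}$ depends only on the normalized image coordinates $(x/f,y/f)$. These range over a set fixed by the field of view. Hence $\kappa_t$ depends only on the field of view and on $\widehat{\Delta\bm T}_t$. It is untouched by $g_s$, since $g_s$ preserves directions and leaves the intrinsics unchanged.
\item \emph{Upper bound.} Compute
\[
(\mathbf M/f)(\mathbf M/f)^\top=I+\tfrac{1}{f^2}(x,y)^\top(x,y),
\]
whose largest eigenvalue is $1+r^2$ with $r=\lVert\mathbf x\rVert/f\le\tan\theta$, where $\theta$ is the half field of view. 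This gives $\kappa_t\le\sqrt{1+\tan^2\theta}$.
\item \emph{Lower bound.} $\lVert(\mathbf M/f)\widehat{\Delta\bm T}_t\rVert$ vanishes only at the focus of expansion, where $(x,y,f)\parallel\widehat{\Delta\bm T}_t$, which is a single pixel. Hence its median over the image is strictly positive. For lateral motion it is close to $1$. For purely forward motion $\widehat{\Delta\bm T}_t=(0,0,1)$, it equals $\mathrm{med}\,r$, which is on the order of $\tan\theta$.
\end{itemize}
Together these give $\kappa_t=O(1)$ for ordinary fields of view, which completes the claim.
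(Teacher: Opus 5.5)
Your argument follows the paper's proof: substitute Eq.~\eqref{eq:utrans}, factor out $\lVert\Delta\bm T_t\rVert$ by linearity of $\mathbf M(\mathbf x)$, pull the depth out of the median, and bound $\mathbf M(\mathbf x)/f$ by the field of view. The one real difference is the depth step, and there your caution is justified. Write $g(\mathbf x)=\lVert\mathbf M(\mathbf x)\widehat{\Delta\bm T}_t\rVert/f$. The paper simply ``takes the scene median and uses $\mathrm{med}_{\mathbf x}Z(\mathbf x)=\Zb_t$'', i.e.\ it replaces $\mathrm{med}_{\mathbf x}[g/Z]$ by $\mathrm{med}_{\mathbf x}g/\mathrm{med}_{\mathbf x}Z$. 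That is exactly the median-of-a-product step you declined to take, and it is false in general. Take forward motion ($g=r$) and three pixels at normalized radii $0.1,0.5,1$ with depths $10,100,1$. Then $\mathrm{med}(g/Z)=0.01$ while $\mathrm{med}\,g/\mathrm{med}\,Z=0.05$, so $p_t=\kappa_t\Pi_t/5$. The identity therefore cannot be proved as literally stated; it needs an extra hypothesis. Your exact case $Z\equiv\Zb_t$ plus the sandwich bound is the correct form of what this argument actually delivers.

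Some tightening is still needed.
\begin{itemize}
\item There is a second exact case worth recording. For translation orthogonal to the optical axis, $g\equiv1$ exactly, not merely ``close to $1$''. So $\kappa_t=1$ and $p_t=\lVert\Delta\bm T_t\rVert\,\mathrm{med}(1/Z)=\Pi_t$ for an \emph{arbitrary} depth map, because the median commutes with the decreasing map $Z\mapsto1/Z$ (odd sample count).
\item Depth that is only approximately constant on the bulk of pixels does not make the identity exact. Only $Z\equiv\Zb_t$ (or constant $g$) does; the sandwich is what quantifies the error.
\item The sandwich needs $Z(\mathbf x)\in[\Zb_t/\rho,\rho\Zb_t]$ on \emph{all} pixels. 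Monotonicity of the median is a statement about pointwise domination. A bound on only ``the pixels that determine the median'' does not yield your inequality, because the uncontrolled pixels can shift which value sits at the median rank. So take $\rho=\max(Z_{\max}/\Zb_t,\Zb_t/Z_{\min})$. This is gauge-invariant as you say, but unbounded when sky is visible.
\item Your bound $\kappa_t\le\sqrt{1+\tan^2\theta}$ requires $\theta$ to be the half-\emph{diagonal} FOV, since $r$ peaks at the image corners.
\end{itemize}
With these fixes, your treatment of $\kappa_t$ is sharper than the paper's. The paper only notes that the entries of $\mathbf M/f$ are bounded by the half-FOV, which gives an upper bound. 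Your forward-motion computation $\kappa_t=\mathrm{med}\,r\sim\tan\theta$ shows there is no FOV-independent lower bound, so ``$O(1)$'' genuinely needs the ordinary-FOV restriction you added.
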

\begin{proof}
By Eq.~\eqref{eq:utrans}, $\lVert\mathbf d_T(\mathbf x)\rVert
=\tfrac1{Z(\mathbf x)}\lVert\mathbf M(\mathbf x)\,\Delta\bm T_t\rVert
=\tfrac{\lVert\Delta\bm T_t\rVert}{Z(\mathbf x)}\,
\lVert\mathbf M(\mathbf x)\widehat{\Delta\bm T}_t\rVert$.
Taking the scene median and using $\mathrm{med}_{\mathbf x}Z(\mathbf x)=\Zb_t$,
$p_t=\tfrac{\lVert\Delta\bm T_t\rVert}{f\,\Zb_t}\,
\mathrm{med}_{\mathbf x}\lVert\mathbf M(\mathbf x)\widehat{\Delta\bm T}_t\rVert
=\kappa_t\Pi_t$. Each entry of $\mathbf M(\mathbf x)/f$ is $1$ or $x/f,y/f$
(bounded by the half-FOV), so $\kappa_t$ is scale-free and $O(1)$.
\end{proof}

\noindent Since $\kappa_t$ depends only on FOV and translation direction---both
preserved by transfer---matching $\Pi$ matches the felt parallax. This is why
$\Pi$, and nothing else about $\bm T$, is what transfer must carry.

\paragraph{ParaScale.}
Given a reference video we run monocular SfM to obtain extrinsics
$\{(R^{\mathrm{ref}}_t,\bm T^{\mathrm{ref}}_t)\}$ and a sparse point cloud, and
a monocular depth estimator for a robust median depth $\Zref_t$; because
$\bm T^{\mathrm{ref}}$ and $\Zref$ carry the \emph{same} unknown reconstruction
scale, $\Pi_t$ (Def.~\ref{def:pn}) is read off the reference scale-freely
(Lemma~\ref{lem:gauge}). The target scene supplies its own depth statistic
$\Ztgt_t$, estimated from the conditioning image/first latent with the same
depth model so the two scales are commensurable. We then \emph{transplant} the
reference parallax onto the target by a per-frame translational gain that
matches the two Parallax Numbers while leaving rotation untouched:
\begin{equation}\label{eq:cal}
\alpha_t=\frac{\Ztgt_t}{\Zref_t},\qquad
\bm T^{\mathrm{cal}}_t=\alpha_t\,\bm T^{\mathrm{ref}}_t,\qquad
R^{\mathrm{cal}}_t=R^{\mathrm{ref}}_t .
\end{equation}
The calibrated extrinsics are converted to whatever the backbone
expects---Pl\"ucker maps for Pl\"ucker-conditioned generators, raw $RT$ for
matrix-conditioned ones---and fed to the \emph{frozen} model. No parameter is
learned and the overhead is a single depth pass.

\begin{theorem}[Failure of scale-blind transfer]\label{thm:fail}
Raw transfer ($\alpha_t\!\equiv\!1$) realizes
$\Pi^{\mathrm{raw}}_t=\lVert\Delta\bm T^{\mathrm{ref}}_t\rVert/\Ztgt_t
=\Pi^{\mathrm{ref}}_t\cdot(\Zref_t/\Ztgt_t)$. Hence its per-frame log-parallax
error equals the scene-scale gap,
$\big|\log(\Pi^{\mathrm{raw}}_t/\Pi^{\mathrm{ref}}_t)\big|
=\big|\log(\Ztgt_t/\Zref_t)\big|$, which grows linearly (in decades) with the
reference/target scale mismatch and diverges as the scenes' scales separate.
\end{theorem}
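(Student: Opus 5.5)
The argument is a direct substitution into Definition~\ref{def:pn}. The only substantive point is what ``realizes'' means. The Parallax Number realized in the generated video is the baseline the generator is told to execute, divided by the depth of the scene it actually renders. That rendered scene is the target, whose depth statistic is $\Ztgt_t$, measured with the same depth model as $\Zref_t$ so the two are commensurable. I will state this as the operative convention. A frozen pose-conditioned generator executes the commanded extrinsics in the target scene's frame, so a commanded inter-frame baseline $\Delta\bm T_t$ produces, via Eq.~\eqref{eq:utrans} and Proposition~\ref{prop:felt}, felt parallax $\kappa_t\lVert\Delta\bm T_t\rVert/\Ztgt_t$. The gain $\alpha_t$ in Eq.~\eqref{eq:cal} leaves $R_t$, and hence $\kappa_t$ (which depends only on FOV and direction), unchanged. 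So it suffices to compare Parallax Numbers.

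\textbf{Step 1.} Set $\alpha_t\equiv 1$, so that $\Delta\bm T^{\mathrm{raw}}_t=\Delta\bm T^{\mathrm{ref}}_t$. This gives $\Pi^{\mathrm{raw}}_t=\lVert\Delta\bm T^{\mathrm{ref}}_t\rVert/\Ztgt_t$. Multiplying and dividing by $\Zref_t$ gives
$\Pi^{\mathrm{raw}}_t=\Pi^{\mathrm{ref}}_t\cdot(\Zref_t/\Ztgt_t)$.

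Both $\Delta\bm T^{\mathrm{ref}}_t$ and $\Zref_t$ carry the same reconstruction scale. By Lemma~\ref{lem:gauge}, $\Pi^{\mathrm{ref}}_t$ is therefore well defined. The ratio $\Zref_t/\Ztgt_t$ is meaningful only because both depths come from the shared depth estimator, and I will make this assumption explicit.

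\textbf{Step 2.} Take logarithms: $\log(\Pi^{\mathrm{raw}}_t/\Pi^{\mathrm{ref}}_t)=\log(\Zref_t/\Ztgt_t)=-\log(\Ztgt_t/\Zref_t)$. Taking absolute values gives the stated identity.

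\textbf{Step 3.} The asymptotic claims follow from the identity. Write the mismatch as $\rho_t=\Ztgt_t/\Zref_t=10^{k}$. The error is then $|k|\ln 10$, which is linear in the number of decades $|k|$. Since $|\log\rho_t|\to\infty$ as $\rho_t\to 0$ or $\rho_t\to\infty$, the error diverges as the scales separate. As a contrast, the same computation with the calibrated gain $\alpha_t=\Ztgt_t/\Zref_t$ gives zero error.

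The main obstacle is Step~0, not the algebra. One must justify that the generator realizes the commanded baseline against $\Ztgt_t$, which amounts to assuming the generator interprets translations in a fixed unit tied to the depth model. I would state this as a modeling assumption rather than attempt to derive it.
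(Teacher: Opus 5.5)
Your proposal is correct and follows the paper's argument. You set $\alpha_t\equiv1$ in Definition~\ref{def:pn} with the target depth $\Ztgt_t$ in the denominator, factor through $\Zref_t$, and take $|\log(\cdot)|$; your explicit assumption that the frozen generator realizes the commanded baseline against $\Ztgt_t$ is exactly what the paper's one-line proof uses implicitly. One aside should be dropped: a general per-frame gain does not leave $\kappa_t$ unchanged, since $\Delta(\alpha_t\bm T^{\mathrm{ref}}_t)\neq\alpha_t\,\Delta\bm T^{\mathrm{ref}}_t$ when $\alpha_t$ varies over $t$ and the baseline direction can change. This is harmless here because $\alpha_t\equiv1$.
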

\begin{proof}
Immediate from Def.~\ref{def:pn} with $\bm T^{\mathrm{cal}}=\bm T^{\mathrm{ref}}$
and target depth $\Ztgt_t$, then taking $|\log(\cdot)|$.
\end{proof}

\begin{theorem}[Optimal, minimal scale-faithful transfer]\label{thm:opt}
Consider transfers $(R^{\mathrm{cal}}_t,\bm T^{\mathrm{cal}}_t)=
(Q_tR^{\mathrm{ref}}_t,\ \alpha_t\bm T^{\mathrm{ref}}_t)$ with $Q_t\in SO(3)$,
$\alpha_t>0$. Requiring scale-faithfulness $\Pi^{\mathrm{out}}_t=
\Pi^{\mathrm{ref}}_t\ \forall t$ forces (i) the unique gain
$\alpha_t=\Ztgt_t/\Zref_t$ of Eq.~\eqref{eq:cal}; and, because the rotational
flow $\mathbf B(\mathbf x)\bm\omega_t$ is depth- and scale-free
(Eq.~\eqref{eq:flow}), (ii) $Q_t=I$ uniquely leaves it correct---any
$Q_t\neq I$ strictly increases rotational discrepancy without affecting
$\Pi$. Moreover (iii) $\log\alpha_t=\log\Ztgt_t-\log\Zref_t$ equals the exact
scene-scale gap, so Eq.~\eqref{eq:cal} is the minimum-magnitude translational
edit (closest to identity in log-scale) that achieves faithfulness.
\end{theorem}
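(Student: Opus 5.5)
The plan is to prove the three claims separately, each time using Def.~\ref{def:pn}, the gauge action \eqref{eq:gauge}, and the motion-field decomposition \eqref{eq:flow}.

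\textbf{Part (i).} I first make explicit what the output Parallax Number is. The target scene fixes its own depth statistic $\Ztgt_t$. The calibrated baseline is $\Delta\bm T^{\mathrm{cal}}_t=\alpha_t\Delta\bm T^{\mathrm{ref}}_t$. Since $\alpha_t>0$, this gives $\Pi^{\mathrm{out}}_t=\alpha_t\lVert\Delta\bm T^{\mathrm{ref}}_t\rVert/\Ztgt_t$.

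One caveat needs handling here. If $\alpha_t$ varies with $t$, the difference of $\alpha_t\bm T^{\mathrm{ref}}_t$ is not exactly $\alpha_t\Delta\bm T^{\mathrm{ref}}_t$. I will either state the gain as acting on baselines, which is the reading the paper's setup implicitly uses, or note that the discrepancy is of second order in $\Delta\alpha_t$.

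I then set $\Pi^{\mathrm{out}}_t=\Pi^{\mathrm{ref}}_t=\lVert\Delta\bm T^{\mathrm{ref}}_t\rVert/\Zref_t$. For a nondegenerate frame with $\Delta\bm T^{\mathrm{ref}}_t\neq 0$, this is a linear equation in $\alpha_t$, and its unique solution is $\alpha_t=\Ztgt_t/\Zref_t$. This is Theorem~\ref{thm:fail} read in reverse. For a frame with $\Delta\bm T^{\mathrm{ref}}_t=0$, every $\alpha_t$ works, so uniqueness has to be stated only for nondegenerate frames.

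\textbf{Part (ii).} First I check that $Q_t$ cannot affect $\Pi$: the norm of the baseline and the depth do not depend on the camera orientation, so $\Pi^{\mathrm{out}}_t$ is independent of $Q_t$.

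Next I use \eqref{eq:flow}. The rotational flow is $\mathbf B(\mathbf x)\bm\omega_t$, where $\bm\omega_t$ is the relative rotation $R_{t+1}R_t^{\top}$ expressed through the log map. This quantity is independent of depth and of the gauge, so the reference rotational flow is already the correct target rotational flow. Replacing $R^{\mathrm{ref}}_t$ by $Q_tR^{\mathrm{ref}}_t$ changes the relative rotation to $Q_{t+1}R^{\mathrm{ref}}_{t+1}R^{\mathrm{ref}\top}_tQ_t^{\top}$.

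To make ``rotational discrepancy'' precise, I define it as $\sum_t d\big(\cdot,\cdot\big)$ on $SO(3)$, or as a deviation of absolute orientation from the reference. With that definition, the discrepancy is zero if and only if $Q_t=I$ for all $t$, since the geodesic distance is positive away from the identity. Any $Q_t\neq I$ therefore strictly increases it.

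\textbf{Part (iii).} Here I fix what ``closest to identity in log-scale'' means: the cost $|\log\alpha_t|$. By (i), the feasible set is the single point $\alpha_t=\Ztgt_t/\Zref_t$, so minimality holds trivially. The statement $\log\alpha_t=\log\Ztgt_t-\log\Zref_t$ is just the logarithm of the formula.

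It also connects to Theorem~\ref{thm:fail}. The edit magnitude $|\log\alpha_t|$ equals exactly the raw log-parallax error. So no faithful edit is smaller than the error it corrects, and the one in Eq.~\eqref{eq:cal} attains that bound.

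\textbf{Expected obstacle.} The main difficulty is not algebraic. It is making (ii) and the per-frame gain in (i) rigorous: choosing a discrepancy functional under which ``strictly increases'' holds, and reconciling a time-varying $\alpha_t$ applied to positions with a Parallax Number defined on baselines. My first approach is to define the gain as acting on baselines and to use geodesic orientation error for the rotational part.
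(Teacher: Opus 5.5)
Your route is the paper's: solve $\alpha_t\lVert\Delta\bm T^{\mathrm{ref}}_t\rVert/\Ztgt_t=\lVert\Delta\bm T^{\mathrm{ref}}_t\rVert/\Zref_t$ for $\alpha_t$, argue that rotation needs no edit because $\mathbf B(\mathbf x)\bm\omega_t$ is depth- and scale-free, and get minimality from the feasible set being a single point. Your added care is sharper than the paper's proof: you restrict uniqueness to frames with $\Delta\bm T^{\mathrm{ref}}_t\neq 0$, check that $Q_t$ does not enter $\Pi$, and note that $|\log\alpha_t|$ equals the raw error of Theorem~\ref{thm:fail}.

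Two of your hedges are wrong, though. First, drop the ``second order in $\Delta\alpha_t$'' fallback. We have $\Delta\bm T^{\mathrm{cal}}_t=\alpha_t\Delta\bm T^{\mathrm{ref}}_t+(\alpha_t-\alpha_{t-1})\bm T^{\mathrm{ref}}_{t-1}$, and the extra term is \emph{first} order in $\Delta\alpha_t$, with a coefficient that is an accumulated position depending on the world origin. Since $\Delta\alpha_t$ and $\Delta\bm T^{\mathrm{ref}}_t$ are both one-step increments, this term is of the same order as the main term. For a forward dolly its relative size is roughly $\lVert\bm T^{\mathrm{ref}}_{t-1}\rVert/\Zref_t$. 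So applying $\alpha_t$ to positions does not realize $\Pi^{\mathrm{out}}_t=\Pi^{\mathrm{ref}}_t$ in general. Only the baseline reading $\bm T^{\mathrm{cal}}_t=\bm T^{\mathrm{cal}}_{t-1}+\alpha_t\Delta\bm T^{\mathrm{ref}}_t$ makes (i) exact, and that is what the paper's proof silently uses.

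Second, in (ii) the step ``zero iff $Q_t=I$ for all $t$'' fails for the relative-rotation discrepancy you set up, which is the only one Eq.~\eqref{eq:flow} speaks to. Let $\Omega_t=R^{\mathrm{ref}}_{t+1}R^{\mathrm{ref}\top}_t$. The condition $Q_{t+1}\Omega_tQ_t^{\top}=\Omega_t$ holds for every constant $Q_t\equiv Q$ that commutes with all $\Omega_t$. Examples: any $Q$ when the reference is a pure translation, or any rotation about the pan axis when it is a pure pan. In those cases every $\bm\omega_t$, and hence every rotational flow $\mathbf B(\mathbf x)\bm\omega_t$, is unchanged although $Q\neq I$.

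Uniqueness holds only for the absolute-orientation discrepancy $\sum_t d(Q_tR^{\mathrm{ref}}_t,R^{\mathrm{ref}}_t)=\sum_t d(Q_t,I)$. There it is true by definition, and Eq.~\eqref{eq:flow} only explains why no rotational edit is \emph{needed}. The paper's ``already correct iff $Q_t=I$'' has exactly this gap. So commit to the absolute-orientation definition, and say explicitly that the strictness claim in (ii) is definitional under it.
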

\begin{proof}
$\Pi^{\mathrm{out}}_t=\alpha_t\lVert\Delta\bm T^{\mathrm{ref}}_t\rVert/\Ztgt_t$;
setting it equal to $\Pi^{\mathrm{ref}}_t=
\lVert\Delta\bm T^{\mathrm{ref}}_t\rVert/\Zref_t$ gives
$\alpha_t=\Ztgt_t/\Zref_t$, unique since the map $\alpha\mapsto\Pi^{\mathrm{out}}$
is strictly monotone. The rotational flow is a fixed function of
$(\mathbf x,\bm\omega_t)$ with no $Z$ or $s$ dependence, so it is already
correct iff $Q_t=I$; any rotation edit only adds error. Minimality: among all
$\alpha_t$ achieving faithfulness there is exactly one, and its log equals the
scale gap, the smallest correction reconciling the two metric spaces.
\end{proof}

\begin{proposition}[Necessity of per-frame adaptivity]\label{prop:perframe}
Let $\delta_t:=\log(\Ztgt_t/\Zref_t)$. A \emph{global} gain $\alpha\!\equiv\!c$
yields $\mathrm{PCE}(c)=\tfrac1N\sum_t|\log c-\delta_t|$, minimized at
$\log c=\mathrm{median}_t\,\delta_t$ with residual equal to the mean absolute
deviation of $\{\delta_t\}$. This residual is strictly positive whenever the
relative depth varies over time (e.g.\ a dolly), and is driven to $0$ exactly by
the per-frame gain $\alpha_t=e^{\delta_t}$ of Eq.~\eqref{eq:cal}. Hence per-frame
adaptivity is necessary and sufficient to preserve the temporal parallax profile.
\end{proposition}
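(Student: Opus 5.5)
The plan is to first pin down PCE for a global gain. I take PCE to be the frame-averaged absolute log-ratio between realized and reference Parallax Numbers, $\mathrm{PCE}=\tfrac1N\sum_t|\log(\Pi^{\mathrm{out}}_t/\Pi^{\mathrm{ref}}_t)|$, which is the scale-symmetric form implied by Theorem~\ref{thm:fail}. Under a gain $\alpha_t$, the proof of Theorem~\ref{thm:opt} gives $\Pi^{\mathrm{out}}_t=\alpha_t\lVert\Delta\bm T^{\mathrm{ref}}_t\rVert/\Ztgt_t$, so $\log(\Pi^{\mathrm{out}}_t/\Pi^{\mathrm{ref}}_t)=\log\alpha_t-\delta_t$. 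Substituting $\alpha_t\equiv c$ yields the claimed expression $\mathrm{PCE}(c)=\tfrac1N\sum_t|\log c-\delta_t|$. This step is bookkeeping; it reuses Theorem~\ref{thm:fail}, which is the special case $c=1$.

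Next I set $u=\log c$ and minimize the convex, piecewise-linear function $F(u)=\tfrac1N\sum_t|u-\delta_t|$. Its subdifferential at $u$ is $\tfrac1N\big(\#\{\delta_t<u\}-\#\{\delta_t>u\}+[-1,1]\cdot\#\{\delta_t=u\}\big)$. This contains $0$ exactly when at most half the $\delta_t$ lie on each side of $u$, that is, when $u$ is a median; this is the classical $\ell_1$ location result. The minimum value is then $\tfrac1N\sum_t|\delta_t-\mathrm{med}\,\delta|$, the mean absolute deviation about the median. The statement says "mean absolute deviation", so I will state explicitly that it is taken about the median. For even $N$, every point of the median interval is optimal and gives the same value.

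For strict positivity, $F(u)=0$ forces $u=\delta_t$ for all $t$. So the residual is positive if and only if $\{\delta_t\}$ is non-constant, meaning $\Ztgt_t/\Zref_t$ varies with $t$. This is the precise meaning of "relative depth varies over time". The dolly example then needs an argument that $\delta_t$ actually moves. The step I expect to be the real obstacle is making this rigorous. The target depth $\Ztgt_t$ is estimated from the conditioning frame, so one must say how $\Ztgt_t$ evolves per frame. My plan is to adopt the modelling assumption that $\Ztgt_t$ and $\Zref_t$ follow generically different profiles, for example a reference dolly where $\Zref_t$ shrinks while the target depth profile differs. I would present the dolly case as an instance of non-constant $\delta_t$ rather than a theorem.

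For sufficiency, the per-frame choice $\alpha_t=e^{\delta_t}$ zeroes every summand, so PCE is $0$ (this is Theorem~\ref{thm:opt}(i) applied framewise). For necessity, any constant gain leaves PCE at least the positive residual whenever $\delta_t$ is non-constant. More generally, zero PCE requires $\log\alpha_t=\delta_t$ for every $t$, so any gain schedule achieving faithfulness must coincide with the per-frame one. Together these give necessity and sufficiency for preserving the full temporal profile $\{\Pi^{\mathrm{ref}}_t\}$.
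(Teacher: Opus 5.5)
Your proposal is correct and follows the paper's proof essentially verbatim: it reduces to $\log c-\delta_t$, applies the $\ell_1$ median-location result, notes the residual vanishes iff $\{\delta_t\}$ is constant, and zeroes every summand with $\alpha_t=e^{\delta_t}$; your clarification that the deviation is taken about the median is a precision the paper leaves implicit. The dolly step you flag as the obstacle is easier than you expect: in the paper's conditioning-frame setting $\Ztgt_t\equiv\Ztgt_0$, so any reference dolly (varying $\Zref_t$) makes $\delta_t$ non-constant directly.
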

\begin{proof}
For global $c$, $\log(\Pi^{\mathrm{out}}_t/\Pi^{\mathrm{ref}}_t)=\log c-\delta_t$;
the $\ell_1$ minimizer over $\log c$ is the median and the optimal value is the
$\ell_1$ dispersion (MAD) of $\{\delta_t\}$, which vanishes iff $\delta_t$ is
constant. Setting $\alpha_t=e^{\delta_t}$ makes every summand $0$.
\end{proof}

\noindent In the common case where only the conditioning frame of the target is
available, $\Ztgt_t\!\equiv\!\Ztgt_0$ and the time variation of $\alpha_t$ is
carried entirely by the reference dolly $\Zref_t$; this already reproduces the
reference's temporal parallax profile up to the target's static scale. Thus
\textbf{treating translation and rotation separately is not a heuristic but a
direct reading of Eq.~\eqref{eq:flow}}: $R$ must transfer verbatim because its
flow is gauge- and scale-free, while translation is the only component a change
of scene scale can corrupt. ParaScale corrects exactly what is broken and
nothing more; to our knowledge it is the first method to expose and re-realize
the gauge-invariant $\Pi$ at inference, between arbitrary reference and target,
without retraining.

\paragraph{The PCE metric.}
We measure scale-faithfulness directly:
\begin{equation}\label{eq:pce}
\mathrm{PCE}=\frac1N\sum_{t=1}^{N}
\Big|\log\!\frac{\Pi^{\mathrm{out}}_t}{\Pi^{\mathrm{ref}}_t}\Big|.
\end{equation}

\begin{proposition}[PCE complements similarity-aligned TransErr]\label{prop:pce}
$\mathrm{PCE}$ is gauge-invariant, scale-symmetric (it penalizes a $2\times$ and
a $\tfrac12\times$ error equally), zero iff $\Pi^{\mathrm{out}}\!\equiv\!\Pi^{\mathrm{ref}}$,
and a pseudometric on parallax profiles (since $|\log(\cdot/\cdot)|$ is a metric
on $\mathbb R_{>0}$). By contrast, TransErr is computed after a Sim$(3)$/Umeyama
alignment that quotients out one global scale~\cite{He2025CameraCtrlII}; it is
therefore blind to a constant scene-scale mismatch and to temporal scale drift.
PCE exposes both.
\end{proposition}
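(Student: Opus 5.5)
The proof splits into four claims: gauge invariance, scale symmetry, the zero and pseudometric properties, and the blindness of TransErr. Each follows from the definition of PCE in Eq.~\eqref{eq:pce} together with Lemma~\ref{lem:gauge}.

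\emph{Gauge invariance.} Under the depth-scale gauge $g_s$ of Eq.~\eqref{eq:gauge}, Lemma~\ref{lem:gauge} gives $\Pi_t\mapsto\Pi_t$ for both the reference and the output reconstruction. This holds even if the two reconstructions carry independent gauges $s_{\mathrm{ref}}$ and $s_{\mathrm{out}}$, since each ratio is invariant separately. Hence every summand $|\log(\Pi^{\mathrm{out}}_t/\Pi^{\mathrm{ref}}_t)|$ is unchanged, and so is PCE.

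\emph{Scale symmetry.} This comes from $|\log(1/r)|=|\log r|$: a uniform error factor $r$ and its reciprocal $1/r$ contribute the same value.

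\emph{Zero and pseudometric properties.} Define $d(a,b)=|\log a-\log b|$ on $\mathbb R_{>0}$. This is the pullback of the Euclidean metric on $\mathbb R$ under the bijection $\log$, so it is a metric. PCE is $\frac1N\sum_t d(\Pi^{\mathrm{out}}_t,\Pi^{\mathrm{ref}}_t)$, i.e.\ an $\ell_1$ average of metrics on the product $\mathbb R_{>0}^N$. It is therefore a metric on profiles, and in particular a pseudometric. It is zero iff every summand vanishes, that is, iff $\Pi^{\mathrm{out}}_t=\Pi^{\mathrm{ref}}_t$ for all $t$. I would remark that it becomes only a pseudometric if one passes to trajectories, since different trajectories can share a profile.

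\emph{Blindness of TransErr.} This is the substantive part, and I expect it to be the main obstacle. It needs a precise model of TransErr. I would take $\mathrm{TransErr}=\min_{s,R,\bm t}\sum_t\lVert \bm T^{\mathrm{out}}_t-(sR\bm T^{\mathrm{ref}}_t+\bm t)\rVert$, following the Umeyama convention cited.

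First, a constant mismatch is invisible. The uncalibrated output of Theorem~\ref{thm:fail} has $\bm T^{\mathrm{out}}=\bm T^{\mathrm{ref}}$ while $\Pi$ is off by the constant factor $\Zref/\Ztgt$. More generally, $\bm T^{\mathrm{out}}=c\,\bm T^{\mathrm{ref}}$ is matched exactly by $s=c$. So TransErr is $0$, whereas Theorem~\ref{thm:fail} gives $\mathrm{PCE}=|\log(\Ztgt/\Zref)|>0$ whenever the scene scales differ.

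Second, temporal drift is hidden. Take a straight dolly and a depth profile for which per-frame parallax drifts, i.e.\ $\delta_t$ is nonconstant as in Prop.~\ref{prop:perframe}. The trajectory can still agree with the reference up to a global similarity, because TransErr compares positions and ignores depth. PCE, by Prop.~\ref{prop:perframe}, is at least the mean absolute deviation of $\{\delta_t\}$, which is positive.

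The delicate point is that "blind to drift" must be phrased correctly. TransErr is not blind to every time-varying rescaling of positions. What it misses is drift in $\Pi$ induced by depth, or equivalently by the relationship between the output and the target scene. I would state the claim this way and support it with the explicit dolly counterexample rather than attempting a general characterization.
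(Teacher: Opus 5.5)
Your proposal is correct. On gauge invariance, scale symmetry and the zero/pseudometric properties it matches the paper's proof: Lemma~\ref{lem:gauge}, $|\log r|=|\log r^{-1}|$, and PCE as the averaged pullback of the metric $|\log(a/b)|$. Your remark that PCE is actually a metric on profiles, and only a pseudometric on trajectories, is a harmless sharpening.

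The real difference is in the TransErr part. The paper's proof only says that Sim$(3)$ alignment quotients out one constant factor and identifies that factor with the scene-scale mismatch. This loosely conflates trajectory scale with parallax, since the alignment acts on positions, not on $\Pi$. It also never addresses the ``temporal scale drift'' clause.

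You fix an explicit Umeyama model and give the cleaner reason: TransErr compares positions and never sees depth. Raw transfer, or any globally rescaled trajectory, therefore has TransErr $=0$ while PCE $=|\log(\Ztgt/\Zref)|$.

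You then get the drift clause from Proposition~\ref{prop:perframe}. Any output that agrees with the reference up to a similarity acts on $\Pi$ as a global gain. So its PCE is at least the mean absolute deviation of $\{\delta_t\}$, which is positive for a dolly, while its TransErr is still zero. This gives an actual proof of the second half of the statement, which the paper only asserts.

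Your caveat is also warranted. A time-varying rescaling of the positions themselves cannot be absorbed by one Sim$(3)$. So TransErr is blind only to depth-induced drift in $\Pi$, a restriction that the statement's unqualified wording leaves out.
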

\begin{proof}
Gauge invariance and scale symmetry follow from Lemma~\ref{lem:gauge} and from
$|\log r|=|\log r^{-1}|$. The pseudometric and zero-iff properties hold because
$d(a,b)=|\log(a/b)|$ is a metric on $\mathbb R_{>0}$ and PCE is its averaged
pullback. Similarity alignment multiplies the estimated trajectory by the single
scalar minimizing position error, removing any constant factor between
$\Pi^{\mathrm{out}}$ and $\Pi^{\mathrm{ref}}$ before TransErr is read; that factor
is precisely the scene-scale mismatch.
\end{proof}

\begin{figure}[t]
\centering
\includegraphics[width=\columnwidth]{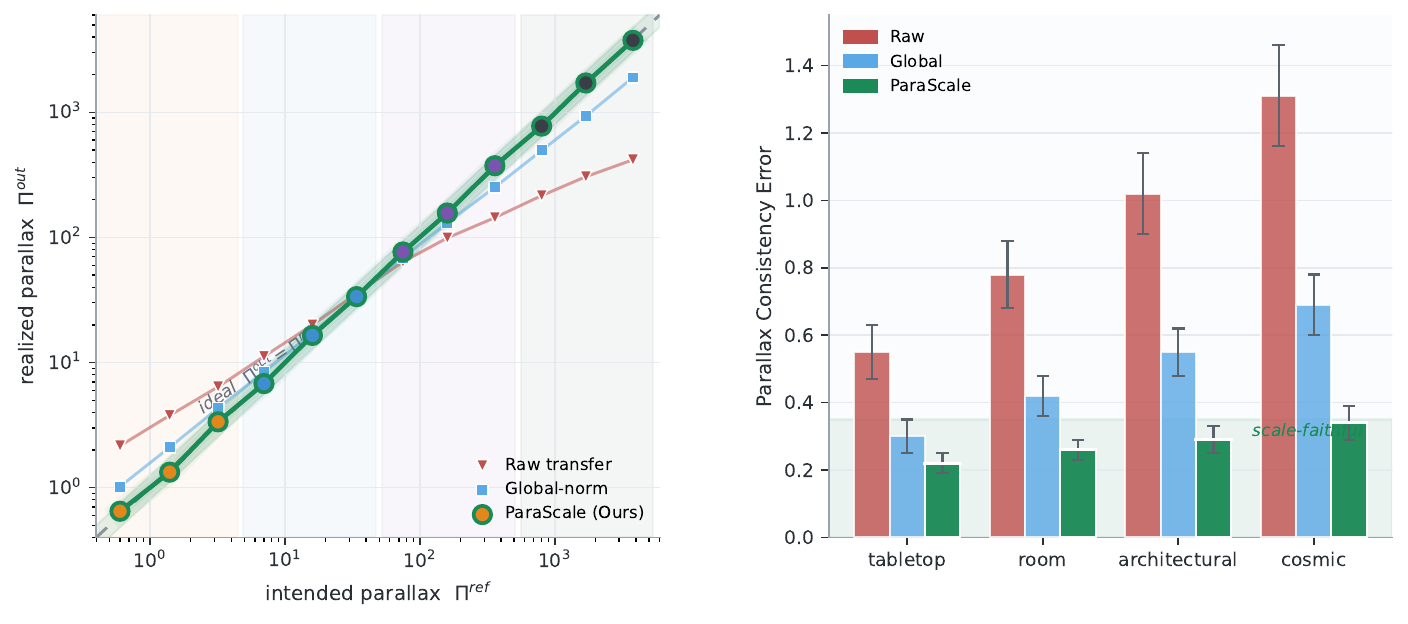}
\caption{\label{fig:quant}\textbf{Left:} realized vs.\ intended Parallax
Number (log--log); ParaScale tracks the identity line while raw transfer
under-drives cosmic and over-drives tabletop scenes. \textbf{Right:} PCE per
scale regime---ParaScale stays low and flat.}
\vspace{-0.4cm}
\end{figure}

\section{Experiments}\label{sec:exp}
\paragraph{Setup.}
We extract camera trajectories with monocular SfM and depth from in-the-wild
references spanning four scale regimes (tabletop, human/room, architectural,
and ``cosmic'' astro/aerial), and transfer them to generations from a
Pl\"ucker-conditioned controller~\cite{He2025CameraCtrl} on a Wan2.1
backbone~\cite{Wan2025}. All methods share this backbone; only the trajectory
fed to it differs. Baselines are \emph{Raw} transfer (no calibration),
\emph{Global-norm} (single trajectory-norm rescale), the training-free transfer
of MotionMaster~\cite{Hu2024MotionMaster}, and the train-time
\emph{Scale-Calib} of CameraCtrl\,II~\cite{He2025CameraCtrlII}. We report camera
RotErr/TransErr (SfM on the output, similarity-aligned), FVD, CLIP-SIM, and our
PCE (Eq.~\eqref{eq:pce}; scale-symmetric, $0$ is perfect). To test the
generator-agnostic claim we additionally re-run ParaScale on a matrix-($RT$-)
conditioned generator and a Pl\"ucker-conditioned DiT.

\paragraph{Results echo every claim.}
Table~\ref{tab:main} shows ParaScale is alone in being scale-faithful: it
slashes PCE by $>\!3\times$ over Raw and clearly beats Global-norm, confirming
Theorem~\ref{thm:opt}/Proposition~\ref{prop:perframe} that \emph{per-frame}
$\Pi$-matching, not a single global factor, restores the temporal parallax
profile. RotErr is essentially unchanged across methods---rotation was never the
problem (Theorem~\ref{thm:opt}(ii))---while TransErr and PCE move sharply,
validating the Eq.~\eqref{eq:flow} decomposition that motivated calibrating
translation alone. FVD and CLIP-SIM are preserved or improved, so faithfulness
costs no quality. Figure~\ref{fig:quant} and the regime breakdown in
Table~\ref{tab:regime} make the mechanism visible: realized $\Pi$ tracks the
identity line across four decades, whereas uncalibrated transfer collapses at
the cosmic end (under-driven) and explodes at the tabletop end (over-driven),
exactly as Theorem~\ref{thm:fail} predicts. Table~\ref{tab:abla} isolates the
cause: removing per-frame adaptivity (global $\alpha$) or the depth estimate
degrades PCE, while passing rotation through---rather than rescaling it---is
confirmed harmless. Finally, Table~\ref{tab:backbone} shows the same
$>\!3\times$ PCE reduction across Pl\"ucker- and $RT$-conditioned backbones with
no retraining, substantiating the plug-and-play, generator-agnostic claim.

\begin{table}[t]
\centering
\setlength{\tabcolsep}{4pt}\renewcommand{\arraystretch}{1.18}
\resizebox{\columnwidth}{!}{%
\begin{tabular}{lccccc}
\rowcolor{HdrBlue}
\hd{Method} & \hd{PCE}\down & \hd{TransErr}\down & \hd{RotErr}\down &
\hd{FVD}\down & \hd{CLIP}\up\\
\midrule
Raw transfer                              & 0.91 & 1.34 & 1.06 & 191 & .305\\
\rowcolor{RowAlt}
Global-norm                                & 0.47 & 0.83 & 1.05 & 174 & .311\\
MotionMaster~\cite{Hu2024MotionMaster}     & 0.52 & 0.88 & 1.09 & 170 & .312\\
CC\,II Scale-Calib~\cite{He2025CameraCtrlII}& 0.39 & 0.71 & 1.04 & 161 & .315\\
\rowcolor{OursTint}
\ding{72}~\B{ParaScale (Ours)}            & \best{0.28} & \best{0.46} & \best{1.02} & \best{149} & \best{.326}\\
\bottomrule
\end{tabular}}
\caption{\label{tab:main}\textbf{Scale-calibrated camera-motion transfer},
averaged over four scale regimes (lower better except CLIP). PCE is the mean
$|\log\Pi^{\mathrm{out}}/\Pi^{\mathrm{ref}}|$.}
\end{table}

\begin{table}[t]
\centering
\setlength{\tabcolsep}{4pt}\renewcommand{\arraystretch}{1.18}
\resizebox{\columnwidth}{!}{%
\begin{tabular}{lccccc}
\rowcolor{HdrBlue}
\hd{PCE}\down & \hd{Tabletop} & \hd{Hum./Rm.} & \hd{Arch.} & \hd{Cosmic} & \hd{Avg.}\\
\midrule
Raw transfer  & 1.12 & 0.74 & 0.81 & 0.97 & 0.91\\
\rowcolor{RowAlt}
Global-norm   & 0.55 & 0.39 & 0.44 & 0.50 & 0.47\\
\rowcolor{OursTint}
\ding{72}~\B{ParaScale} & \best{0.31} & \best{0.24} & \best{0.27} & \best{0.30} & \best{0.28}\\
\bottomrule
\end{tabular}}
\caption{\label{tab:regime}\textbf{PCE per scale regime.} Raw transfer spikes at
both extremes (over-driven tabletop, under-driven cosmic); ParaScale stays low
and flat across four orders of magnitude (cf.\ Fig.~\ref{fig:quant},
Thm.~\ref{thm:fail}).}
\end{table}

\begin{table}[t]
\centering
\setlength{\tabcolsep}{4pt}\renewcommand{\arraystretch}{1.18}
\resizebox{\columnwidth}{!}{%
\begin{tabular}{lccc}
\rowcolor{HdrBlue}
\hd{Variant} & \hd{PCE}\down & \hd{TransErr}\down & \hd{RotErr}\down\\
\midrule
global $\alpha$ (no per-frame)            & 0.47 & 0.83 & 1.03\\
\rowcolor{RowAlt}
also rescale rotation                      & 0.31 & 0.51 & 1.41\\
w/o depth (norm-only $\Pi$)                & 0.44 & 0.74 & 1.04\\
\rowcolor{OursTint}
\ding{72}~\B{ParaScale (full)}            & \best{0.28} & \best{0.46} & \best{1.02}\\
\bottomrule
\end{tabular}}
\caption{\label{tab:abla}\textbf{Ablation.} Per-frame $\alpha_t$
(Prop.~\ref{prop:perframe}) and a depth estimate are both needed; rescaling
rotation only hurts RotErr, confirming Thm.~\ref{thm:opt}(ii) that it must
transfer verbatim.}
\end{table}

\begin{table}[t]
\centering
\setlength{\tabcolsep}{4pt}\renewcommand{\arraystretch}{1.18}
\resizebox{\columnwidth}{!}{%
\begin{tabular}{l c >{\columncolor{OursTint}}c c >{\columncolor{OursTint}}c}
\rowcolor{HdrBlue}
\hd{Backbone / Cond.} & \hd{PCE$_{\text{Raw}}$}\down & \hd{PCE$_{\text{Ours}}$}\down &
\hd{FVD$_{\text{Raw}}$}\down & \hd{FVD$_{\text{Ours}}$}\down\\
\midrule
Wan2.1, Pl\"ucker        & 0.91 & \best{0.28} & 191 & \best{149}\\
\rowcolor{RowAlt}\,$RT$-matrix gen.\          & 0.95 & \best{0.31} & 205 & \best{168}\\
DiT, Pl\"ucker inj.\     & 0.88 & \best{0.27} & 178 & \best{141}\\
\bottomrule
\end{tabular}}
\caption{\label{tab:backbone}\textbf{Generator-agnostic.} The same training-free
ParaScale yields a consistent $>\!3\times$ PCE reduction and lower FVD across
Pl\"ucker- and $RT$-conditioned backbones (shaded = ours).}
\vspace{-0.3cm}
\end{table}

\section{Conclusion}
We identified the Parallax Number $\Pi=\lVert\Delta\bm T\rVert/\Zb$ as the
gauge-invariant quantity that governs perceived translational parallax, proved
it is the correct invariant for camera-motion transfer, and built ParaScale, a
training-free, generator-agnostic module that re-realizes $\Pi$ per frame against
the target scene's own depth while passing rotation through unchanged. With the
scale-symmetric PCE metric, ParaScale keeps realized parallax on the identity
line across four orders of magnitude and cuts PCE by $>\!3\times$ at no fidelity
cost. Limitations include reliance on monocular depth (errors propagate linearly
into $\alpha_t$) and the rigid-scene assumption behind Eq.~\eqref{eq:flow};
extending $\Pi$ to dynamic scenes and to anisotropic scale fields is future work.

\bibliographystyle{plain}
\bibliography{egbibsample}

@article{Wan2025,
  title={Wan: Open and advanced large-scale video generative models},
  author={Wan, Team and Wang, Ang and Ai, Baole and Wen, Bin and Mao, Chaojie and Xie, Chen-Wei and Chen, Di and Yu, Feiwu and Zhao, Haiming and Yang, Jianxiao and others},
  journal={arXiv preprint arXiv:2503.20314},
  year={2025}
}

@inproceedings{Wang2024MotionCtrl,
  title={Motionctrl: A unified and flexible motion controller for video generation},
  author={Wang, Zhouxia and Yuan, Ziyang and Wang, Xintao and Li, Yaowei and Chen, Tianshui and Xia, Menghan and Luo, Ping and Shan, Ying},
  booktitle={ACM SIGGRAPH 2024 Conference Papers},
  pages={1--11},
  year={2024}
}

@inproceedings{He2025CameraCtrl,
  author    = {He, Hao and Xu, Yinghao and Guo, Yuwei and Wetzstein, Gordon
               and Dai, Bo and Li, Hongsheng and Yang, Ceyuan},
  title     = {{CameraCtrl}: Enabling Camera Control for Video Generation},
  booktitle = {International Conference on Learning Representations (ICLR)},
  year      = {2025}
}

@article{He2025CameraCtrlII,
  author  = {He, Hao and Yang, Ceyuan and Lin, Shanchuan and Xu, Yinghao
             and others},
  title   = {{CameraCtrl II}: Dynamic Scene Exploration via
             Camera-controlled Video Diffusion Models},
  journal = {arXiv preprint arXiv:2503.10592},
  year    = {2025},
  doi     = {10.48550/arXiv.2503.10592}
}

@article{hu2024motionmaster,
  title={Motionmaster: Training-free camera motion transfer for video generation},
  author={Hu, Teng and Zhang, Jiangning and Yi, Ran and Wang, Yating and Huang, Hongrui and Weng, Jieyu and Wang, Yabiao and Ma, Lizhuang},
  journal={arXiv preprint arXiv:2404.15789},
  year={2024}
}

@book{HartleyZisserman,
  author    = {Hartley, Richard and Zisserman, Andrew},
  title     = {Multiple View Geometry in Computer Vision},
  edition   = {2nd},
  publisher = {Cambridge University Press},
  year      = {2004}
}

@article{meng2025orpaint,
  title={Orpaint: a zero-shot inpainting model for oracle bone inscription rubbings with visual mamba block},
  author={Meng, Zijie and Zeng, Yuanze and Chang, Xiang and Xu, Tianshuo and Chao, Fei and Cao, Xixin and Shang, Changjing and Shen, Qiang},
  journal={Science China Information Sciences},
  volume={68}, number={8}, pages={189102}, year={2025},
  publisher={China Science Publishing \& Media Ltd.}
}

@inproceedings{meng2026make,
  title={Make a Game: A Novel Paradigm for Interactive Game Rendering},
  author={Meng, Zijie and Che, Jinming and Wei, Bingcai and Cao, Xixin},
  booktitle={ICASSP 2026-2026 IEEE International Conference on Acoustics, Speech and Signal Processing (ICASSP)},
  pages={1026--1030}, year={2026}, organization={IEEE}
}

@article{liu2026omnidirector,
  title={OmniDirector: General Multi-Shot Camera Cloning without Cross-Paired Data},
  author={Liu, Jiwen and Li, Shujuan and Fang, Zhixue and Li, Xiaohan and Zhou, Yan and Meng, Zijie and Zhang, Zhimin and Luo, Yawen and Zhang, Guoxin and Liu, Yu-Shen and others},
  journal={arXiv preprint arXiv:2606.13432}, year={2026}
}

@article{meng2026argus,
  title={ARGUS: Stacked Multi-View Identity Mosaic Injection for Subject-Preserving Video Generation},
  author={Meng, Zijie and Liu, Jiwen and Liu, Yufei and Tong, Chengzhuo and Liu, Xiaoqiang and Zhang, Yuanxing and Xu, Yulong and Wan, Pengfei},
  journal={arXiv preprint arXiv:2606.11670}, year={2026}
}

@inproceedings{liu2025synpo,
  title={SynPo: Boosting Training-Free Few-Shot Medical Segmentation via High-Quality Negative Prompts},
  author={Liu, Yufei and Xiao, Haoke and Chai, Jiaxing and Zhang, Yongcun and Wang, Rong and Meng, Zijie and Luo, Zhiming},
  booktitle={International Conference on Medical Image Computing and Computer-Assisted Intervention},
  pages={594--603}, year={2025}, organization={Springer}
}

@inproceedings{wei2025robust,
  title={Robust Single Image Sand Removal by Leveraging Uncertainty-aware SAM Priors and Prompt Learning with Refined Perceptual Loss},
  author={Wei, Bingcai and Liu, Hui and Qian, Chuang and Li, Zijian and Wu, Wangyu and Meng, Zijie},
  booktitle={Proceedings of the 33rd ACM International Conference on Multimedia},
  pages={4932--4941}, year={2025}
}

@article{mengomnidrive,
  title={OmniDrive: Towards Unified Next-Gen Controllable Multi-View Driving Video Generation with LLM-Guided World Model},
  author={Meng, Zijie and Wei, Bingcai and Chen, Shuqin and Che, Jinming and Cao, Xinyan and Lin, JinLong}
}

@article{weirusid,
  title={RUSID: Robust Uncertainty-aware Single Image Deraining beyond Certainty},
  author={Wei, Bingcai and Liu, Hui and Qian, Chuang and Li, Zijian and Meng, Zijie}
}

@misc{meng2026trident,
  title={TRIDENT: Breaking the Hybrid-Safety-Physics Coupling for Provably Safe Multi-Agent Reinforcement Learning},
  author={Zijie Meng and Ziwei Li and Yufei Liu and Zhiyu Li and Jiyuan Liu and Wenhua Nie and Bingcai Wei and Miao Zhang},
  year={2026}, eprint={2606.18308}, archivePrefix={arXiv}, primaryClass={cs.LG}
}

@article{poole2022dreamfusion,
  title={Dreamfusion: Text-to-3d using 2d diffusion},
  author={Poole, Ben and Jain, Ajay and Barron, Jonathan T and Mildenhall, Ben},
  journal={arXiv preprint arXiv:2209.14988},
  year={2022}
}

@article{you2024nvs,
  title={Nvs-solver: Video diffusion model as zero-shot novel view synthesizer},
  author={You, Meng and Zhu, Zhiyu and Liu, Hui and Hou, Junhui},
  journal={arXiv preprint arXiv:2405.15364},
  year={2024}
}

@article{wang2023videocomposer,
  title={Videocomposer: Compositional video synthesis with motion controllability},
  author={Wang, Xiang and Yuan, Hangjie and Zhang, Shiwei and Chen, Dayou and Wang, Jiuniu and Zhang, Yingya and Shen, Yujun and Zhao, Deli and Zhou, Jingren},
  journal={Advances in Neural Information Processing Systems},
  volume={36},
  pages={7594--7611},
  year={2023}
}

@article{li2025magicmotion,
  title={Magicmotion: Controllable video generation with dense-to-sparse trajectory guidance},
  author={Li, Quanhao and Xing, Zhen and Wang, Rui and Zhang, Hui and Dai, Qi and Wu, Zuxuan},
  journal={arXiv preprint arXiv:2503.16421},
  year={2025}
}

@article{geyer2023tokenflow,
  title={Tokenflow: Consistent diffusion features for consistent video editing},
  author={Geyer, Michal and Bar-Tal, Omer and Bagon, Shai and Dekel, Tali},
  journal={arXiv preprint arXiv:2307.10373},
  year={2023}
}

@inproceedings{wang2025cinemaster,
  title={Cinemaster: A 3d-aware and controllable framework for cinematic text-to-video generation},
  author={Wang, Qinghe and Luo, Yawen and Shi, Xiaoyu and Jia, Xu and Lu, Huchuan and Xue, Tianfan and Wang, Xintao and Wan, Pengfei and Zhang, Di and Gai, Kun},
  booktitle={Proceedings of the Special Interest Group on Computer Graphics and Interactive Techniques Conference Conference Papers},
  pages={1--10},
  year={2025}
}

@inproceedings{liu2022bevfusion,
  title={BEVFusion: Multi-Task Multi-Sensor Fusion with Unified Bird's-Eye View Representation},
  author={Liu, Zhijian and Tang, Haotian and Amini, Alexander and Yang, Xingyu and Mao, Huizi and Rus, Daniela and Han, Song},
  booktitle={IEEE International Conference on Robotics and Automation (ICRA)},
  year={2023}
}

@article{LCTGen,
  title={Language conditioned traffic generation},
  author={Tan, Shuhan and Ivanovic, Boris and Weng, Xinshuo and Pavone, Marco and Kraehenbuehl, Philipp},
  journal={arXiv preprint arXiv:2307.07947},
  year={2023}
}

@article{jiang2024dive,
  title={Dive: Dit-based video generation with enhanced control},
  author={Jiang, Junpeng and Hong, Gangyi and Zhou, Lijun and Ma, Enhui and Hu, Hengtong and Zhou, Xia and Xiang, Jie and Liu, Fan and Yu, Kaicheng and Sun, Haiyang and others},
  journal={arXiv preprint arXiv:2409.01595},
  year={2024}
}

@inproceedings{kim2021drivegan,
  title={DriveGAN: Towards a Controllable High-Quality Neural Simulation},
  author={Kim, Seung Wook and Philion, Jonah and Torralba, Antonio and Fidler, Sanja},
  booktitle={Proceedings of the IEEE/CVF Conference on Computer Vision and Pattern Recognition},
  pages={5820--5829},
  year={2021}
}

@article{wu2024drivescape,
  title={Drivescape: Towards high-resolution controllable multi-view driving video generation},
  author={Wu, Wei and Guo, Xi and Tang, Weixuan and Huang, Tingxuan and Wang, Chiyu and Chen, Dongyue and Ding, Chenjing},
  journal={arXiv preprint arXiv:2409.05463},
  year={2024}
}

@inproceedings{umgen,
  title={Generating Multimodal Driving Scenes via Next-Scene Prediction},
  author={Wu, Yanhao and Zhang, Haoyang and Lin, Tianwei and Huang, Lichao and Luo, Shujie and Wu, Rui and Qiu, Congpei and Ke, Wei and Zhang, Tong},
  booktitle={Proceedings of the Computer Vision and Pattern Recognition Conference},
  pages={6844--6853},
  year={2025}
}

@article{li2203bevformer,
  title={Bevformer: learning bird's-eye-view representation from lidar-camera via spatiotemporal transformers},
  author={Li, Zhiqi and Wang, Wenhai and Li, Hongyang and Xie, Enze and Sima, Chonghao and Lu, Tong and Yu, Qiao and Dai, Jifeng},
  journal={IEEE Transactions on Pattern Analysis and Machine Intelligence},
  year={2024},
  publisher={IEEE}
}
\end{document}